\newcommand{\EX}{\mathbb{E}}
\newcommand{\REG}{\mathcal{R}}
\newcommand{\CF}[1]{\boldsymbol{#1}}
\definecolor{blue}{rgb}{0,0,0.8}
\definecolor{magenta}{cmyk}{0,1,0,0}
\definecolor{lightgrey}{rgb}{0.5,0.5,0.5}
\definecolor{DarkGreen}{rgb}{0.0, 0.5, 0.0}
\newcommand{\argmax}{\operatornamewithlimits{argmax}}
\renewcommand{\u}{{\boldsymbol{u}}}
\renewcommand{\v}{{\boldsymbol{v}}}
\newcommand{\WW}{\boldsymbol{\mathcal{W}}}
\newcommand{\PP}{\boldsymbol{\mathcal{P}}}
\newcommand{\RR}{\mathbb{R}}
\DeclareMathOperator{\tr}{tr}
\ifundef{\half}{
  \newcommand{\half}{{\tfrac{1}{2}}}
}{}
\newcommand{\A}{{\CF{A}}}
\newcommand{\N}{{\CF{N}}}
\newcommand{\W}{{\CF{W}}}
\newcommand{\X}{{\CF{X}}}
\newcommand{\G}{{\CF{G}}}
\renewcommand{\P}{{\CF{P}}}
\newcommand{\R}{\CF{R}}
\newcommand{\x}{{\boldsymbol{x}}}
\newcommand{\e}{{\boldsymbol{e}}}
\begin{document}

\title{PCA with Gaussian perturbations}

\author{\name Wojciech Kot{\l}owski \email wkotlowski@cs.put.poznan.pl \\
        \addr Pozna{\'n} University of Technology, Poland
        \AND
        \name Manfred K. Warmuth \email manfred@cse.ucsc.edu \\
        \addr University of California, Santa Cruz
}

\newcommand{\fix}{\marginpar{FIX}}
\newcommand{\new}{\marginpar{NEW}}

\newcommand{\NN}{\mathcal{N}}
\newcommand{\Wtilde}{\widetilde{\boldsymbol{W}}}
\newcommand{\EE}[1]{\mathbb{E}\left[#1\right]}

\editor{}

\maketitle

\begin{abstract}
Most of machine learning deals with vector parameters.
Ideally we would like to take higher order information
into account and make use of matrix or even tensor parameters. 
However the resulting algorithms are usually inefficient. 
Here we address on-line learning with matrix parameters. 
It is often easy to obtain online algorithm
with good generalization performance if you eigendecompose
the current parameter matrix in each trial (at a cost of $O(n^3)$
per trial). Ideally we want to avoid the decompositions and
spend $O(n^2)$ per trial, i.e. linear time in the size of the matrix data. 
There is a core trade-off between the running time 
and the generalization performance, here measured by the 
regret of the on-line algorithm (total gain of the best off-line predictor 
minus the total gain of the on-line algorithm). 

We focus on the key matrix problem of rank $k$
Principal Component Analysis in $\R^n$ where $k \ll n$. 
There are $O(n^3)$ algorithms that achieve the optimum regret 
but require eigendecompositions. We develop a simple algorithm
that needs $O(kn^2)$ per trial whose regret is off by a small factor of $O(n^{1/4})$.
The algorithm is based on the Follow the Perturbed Leader paradigm.
It replaces full eigendecompositions at each trial by the
problem finding $k$ principal components of
the current covariance matrix that is perturbed by Gaussian noise.
\end{abstract}

\section{Introduction}

In Principal Component Analysis
(PCA), the data points $\x_t\in\RR^n$ are projected 
onto a $k$-dimensional subspace (represented by a rank $k$ projection matrix $\P$).
The goal is to maximize the total squared norm of the projected data points,
$\sum_t \|\P\x_t\|^2$. This is equivalent
to finding the {\em principal} eigenvectors $\u_1,\ldots,\u_k$ 
i.e. those belonging to the $k$ largest eigenvalues of the data ``covariance matrix''
$\sum_t \x_t \x_t^\top$, and setting the projection matrix
to $\P = \sum_{i =1} ^k \u_i \u_i^\top$.
In this paper we choose the online version of PCA 
\citep{pca} as our paradigmatic matrix parameter problem
and we explore the core trade-off between generalization performance
and time efficiency per trial for this problem. 
In each trial $t=1,\ldots,T$, the online PCA algorithm chooses 
a projection matrix $\P_t$ of rank $k$
based on the previously observed points $\x_1,\ldots,\x_{t-1}$.
Then a next point $\x_t$ is revealed and the algorithm receives gain $\|\P_{t} \x_t\|^2$.
The goal here is to obtain an online algorithm whose cumulative gain over trials 
$t=1,\ldots,T$ is close to the cumulative gain
of the best rank $k$ projection matrix chosen in hindsight after seeing all $T$ instances. 
The maximum difference between the cumulative gain of the best off-line comparator and 
the cumulative gain of the algorithm 
and is called the (worst-case) \emph{regret}. 

If you use the principal eigenvectors $\u_i$ ($1\le i\le
k$) as the parameters, then the gain is formidably non-convex. However the key insight
of \citep{var,pca} is the observation 
that the seemingly quadratic gain $\|\P\x_t\|^2$ 
is a {\em linear function} of the projection matrix $\P$ 
when the data is expressed in terms of the 
\emph{matrix instance} $\x_t\x_t^\top$ rather than vector instance $\x_t$:
\[ \|\P \x_t \|^2 ~=~ \x_t \P^2 \x_t
~\stackrel{\P^2=\P}{=}~ \x_t \P \x_t ~=~ \tr(\P \,\x_t \x_t^\top).\]
Good algorithms hedge their bets by predicting with a random projection matrix. 
In that case $\EX\, [ \|\P \x_t \|^2]$ becomes $\tr(\EX[\P]\, \x_t\x_i^\top)$.
Thus it is natural to use mixtures $\EX[\P]$ of rank $k$ projection
matrices as the parameter matrix of the algorithm. 
Such mixture are positive definite matrices of trace $k$ with eigenvalues capped at $1$.
The gist is that the gain is now linear in this alternate parameter matrix 
and the non-convexity has been circumvented.
This observation is the starting point for lifting known online learning algorithms
for linear gain/loss on vector instances to the matrix domain,
which resulted in the Matrix Exponentiated Gradient (MEG) algorithm
\citep{meg,semidefinite,pca}, as well as the (matrix) Gradient Descent (GD) algorithm 
\citep{AroraNIPS,AroraAllerton,NieALT}.
Both algorithms are motivated by trading off a Bregman divergence against the gain,
followed by a Bregman projection onto the convex hull of
rank $k$ projection matrices which is our parameter space.
The worst-case regret of these algorithms for online PCA is
optimal (within constant factors). Furthermore MEG remains optimal for a generalization of
the PCA problem to the \emph{dense instance} case 
in which the ``sparse'' rank one outer products $\x_t\x_t^\top$ of
vanilla PCA are generalized to positive definite matrices $\X_t$ with bounded eigenvalues.
\citep{NieALT}.

Unfortunately, both algorithms 
require full eigendecomposition of the parameter matrix
at a cost of $O(n^3)$ per trial.%
\footnote{For the rank one instances $\x_t\x_t^\top$ of PCA the update of the eigenvalues takes 
$O(n^2)$ \cite{rankone} per trial. 
However the update of the eigensystem remains $O(n^3)$.}
It was posed as an open problem \citep{pcaopen}
whether there exists an algorithm with good regret guarantees
requiring time comparable with that of finding the top $k$
eigenvectors.
The latter operation operation can be done efficiently by means of
e.g. power iteration based methods \citep{arnoldi,lanczos}:
It essentially requires time $O(kn^2)$, which is much less
than the cost of a full eigendecomposition 
in the natural case when $k \ll n$.
This operation is also used by the simple \emph{Follow the Leader} 
algorithm which predicts with
the $k$ principal components of the current covariance matrix. 
This algorithm performs well when the data is i.i.d. but
can be forced to have large regret on worst-case data.

In this paper, we provide an algorithm based on the \emph{Follow the Perturbed Leader} (FPL) approach,
which perturbs the cumulative data matrix by adding a random symmetric noise matrix,
and then predicts with the $k$ principal components of the current perturbed covariance matrix. 
The key question is what perturbation to use and whether
that exists a perturbation for which FPL achieves close to
optimal regret.
In the vanilla vector parameter based FPL algorithm \citep{fpl}, 
exponentially distributed perturbation lead to optimal
algorithms when the perturbations is properly scaled.
We could apply the same perturbations to the eigenvalues
of the current parameter matrix and achieve optimal regret.
However this approach requires us to eigendecompose the
current parameter matrix and this defeats the purpose.
We need to find a perturbation that requires $O(n^2)$ time
to compute instead of $O(n^3)$.
We use a random symmetric Gaussian matrix
(a so called \emph{Gaussian orthogonal ensemble}), 
which consists of entries generated i.i.d. from a Gaussian distribution. 
Our approach is more similar to the recent algorithm based on 
\emph{Random Walk Perturbation} \cite{rwp}
and can be considered as a matrix generalization thereof, drawing connections to Random Matrix Theory \cite{randommatrix}.
Calculation of our random noise matrix requires $O(n^2)$ 
and hence the total computational time is dominated by
finding the $k$ principal components of the perturbed matrix. 
At the same time, our algorithm achieves $O(n^{1/4} \sqrt{kT})$
worst-case regret for online PCA (sparse instances) and $O(k\sqrt{nT})$ worst-case regret for dense instance case. Comparing to the minimax regrets
$\Theta(\sqrt{kT})$ and $\Theta(k \sqrt{T \log n})$ in the sparse and dense cases, respectively, we are only a factor of $O(n^{1/4})$ and
$O(n^{1/2})$ off from the optimum, respectively.

Our approach can be considered a generalization of the \emph{Random Walk Perturbation} (RWP) algorithm \cite{rwp}
to the matrix domain. In RWP, an independent Bernoulli coin flip is added to each component of the loss/gain vector, the process 
which can be closely approximated (through Central Limit Theorem) by Gaussian perturbations with variance growing linearly in $t$.
This is also the case of our algorithm, where we use a symmetric matrix with i.i.d. Gaussian-distributed entries with variance also growing
linearly in $t$.
Our analysis, however, resorts to properties of random matrices, e.g. expected maximum eigenvalue, which leads to worse regret bounds
than in the vector case.
Interestingly, comparing to \cite{rwp} we get rid of additional $O(\log T)$ factor in the regret.

\paragraph{Related work.}
Online PCA, in the framework considered here, was introduced in \cite{meg} and independently in \cite{semidefinite},
along with Matrix Exponentiated Gradient algorithm. The problem of finding efficient algorithms which avoid full eigendecomposition
was posed as an open problem by \cite{pcaopen}. An efficient algorithm for PCA based on Online Gradient Descent was proposed
in \cite{AroraNIPS,AroraAllerton}, but the main version of the algorithm (\emph{Matrix Stochastic Gradient, MSG}) still requires $O(n^3)$ in the worst case,
while a faster version (\emph{Capped MSG}) operates on low-rank deterministic parameter matrix, which can be shown to have
regret linear in $T$ in the adversarial setting. The most closely related to our work is \cite{GarberICML}, in which several
algorithms are proposed for learning the top eigenvector (i.e., the simplest case $k=1$ of online PCA):
based on online Franke-Wolfe method \cite{frankewolfe},
and based on FPL approach with entry-wise uniform perturbation, exponentially-distributed perturbation, and a perturbation
based on a sparse rank-one random matrix $\v \v^\top$ composed of a random Gaussian vector $\v$. Except for the last algorithm,
all the other approaches have regret guarantees which are inferior comparing to our method,
either in terms of dependence on $T$ or dependence on $n$, or both. The method based on rank-one perturbation achieves regret
bound $O(\sqrt{nT})$ which is the same as ours in the dense instance case. It is not clear whether this method would benefit anyhow
from sparsity of instance matrices (as in the standard online PCA), and whether it would easily generalize to $k > 1$ case.

There are other formulations of online PCA problem. For instance, \cite{GarberSODA} aims at finding low
dimensional data representation in a single pass, with the goal of good reconstruction guarantees using only a small number of dimensions.
On the other hand, \cite{incrementalPCA,stochasticPCA} consider online PCA in the stochastic optimization setting. However, the algorithm
considered therein are not directly applicable in the adversarial setting studied in this work.

\paragraph{New conjecture.}
Our algorithm is efficient, but its
suboptimal regret is due to
the fact that the noise matrix does not adapt to the
eigensystem of cumulative covariance matrix. 
What $O(n^2)$ perturbation can we use that does adapt
to the eigensystem of the current covariance matrix?
A clear candidate is to use \emph{Dropout Perturbation}. 
In the vector case this perturbation independently 
at random zeros out each component of the gain/loss vector in each trial
\citep{dropout} and achieves optimal regret without having
to tune the magnitude of the perturbations. 
In the matrix case it would be natural
to independently zero out components of the instance matrix
when expressed in the eigensystem of the current loss matrix.
However these approaches again require eigendecompositions.

A new variant is to skip at trial $t$ the entire
instance matrix with probability half, i.e. at trial $t$
predict with the $k$ principal components of the
following perturbed current covariance matrix 
$$\sum_{q=1}^{t-1} \alpha_t \X_t,$$ 
where the $\alpha_t$ are Bernoilli coin flips with probability half. 
We call this the {\em Follow the Skipping Leader} algorithm
because it skips entire instances $\X_t$ with probability
half. It is easy to maintain this perturbed covariance
matrix in $O(n^2)$ time per trial. 
However, unfortunately already
in the vector parameter case this algorithm can be forced
to have a gravely suboptimal linear regret in $n$ \citep{ce}.
The counter example requires dense loss vectors.
When lifting this counter example to the matrix setting
then the regret can still be forced to be linear with
sparse instance $\x_t\x_t^\top$.
However we conjecture that the time efficient Follow the Skipping Leader algorithm achieves the
optimal regret for standard PCA with sparse instance.
This is because in PCA regret is naturally measured w.r.t.
the maximum gain of the best rank $k$ subspace and not the loss. 
Note that that this type of problem is decidedly not
symmetric w.r.t. gain and loss (See \cite{NieALT} for an
extended discussion).

Finally we also conjecture the regret bounds achieved by the
algorithm of this paper (Gaussian perturbations)
is the best you can achieve with rotation invariant noise
and knowing this fact would be interesting in its own right.


\section{Problem setting}

In the online PCA, in each trial $t=1,\ldots,T$, the algorithm probabilistically chooses a projection matrix
$\P_t \in \RR^{n \times n}$ of rank $k$. Then a point $\x_t \in \RR^n$ is revealed and the algorithm receives
\emph{gain} $\|\P_t \x_t \|^2= \tr(\P_t \x_t \x_t^\top)$.
Note again that the
gain is linear in $\P_t$ and in \emph{instance matrix} $\x_t \x_t^\top$. 
This observation calls for generalization of the online PCA 
in which the instance matrix is any positive definite matrix $\X_t$ 
(with bounded eigenvalues) and the gain becomes $\tr(\P_t\X_t)$.
We call this the \emph{dense instance} case 
as opposed to standard online PCA, which we call \emph{sparse instance} case. 

In the above protocol, the algorithm is allowed to choose its $k$ 
dimensional subspace $\P_t$ probabilistically. 
Therefore we use expected gain $\EX [\tr(\P_t\X_t)]$
as the evaluation of the algorithm's performance, where the expectation is with respect to
the internal randomization of the algorithm. 
The \emph{regret} of the algorithm is then the difference between 
the cumulative gain of the best off-line rank $k$ projector and the the cumulative gain of the algorithm (due to linearity of gain,
no randomization is necessary when considering the best off-line comparator):
\begin{equation}
\REG ~= \max_{P \in \PP} \left\{ \sum_{t=1}^T \tr(\P\X_t) 
\right\} ~-~
\sum_{t=1}^T \EX [\tr(\P_t\X_t)] 
~=~ \lambda_{1:k} \left(\X_{\leq T} \right) -
\sum_{t=1}^T \EX [\tr(\P_t\X_t)],
\label{eq:regret-def}
\end{equation}
where $\PP$ denotes the set of all rank $k$ projectors, $\X_{\leq t} = \sum_{q=1}^t \X_q$ is the cumulative data matrix, and
$\lambda_{1:k}(\X) = \sum_{i=1}^k \lambda_i(\X)$ denotes the sum of top $k$ eigenvalues of $\X$.
The goal of the algorithm is to have small regret for any sequence of instance matrices. 
Since the regret
naturally scales with the eigenvalues of $\X_t$, we assume for the sake of simplicity that all eigenvalues
of $\X_t$ are bounded by $1$ (i.e. for each $t=1,\ldots,T$, the spectral norm of $\X_t$, $\|\X_t\|_{\infty} \leq 1$).

We note that due to linearly of the gain, $\EX [\tr(\P_t\X_t)] = \tr(\EX [\P_t]\X_t)$, so that
the algorithm's gain is fully determined by $\EX[\P_t]$, a convex combinations of rank $k$ projection matrices.
Hence, the parameter set of the algorithm can be equivalently taken as $\WW = \mathrm{conv}(\PP)$,
a convex hull of $\PP$, which is
a set of positive definite matrices with trace $k$ and all eigenvalues not larger than $1$ \cite{pca}. This is the
key idea behind MEG and GD algorithms, which maintain the uncertainty about projection matrix by means
of a parameter $\W_t \in \WW$,  update their parameter 
by minimizing a trade-off between a divergence of the new and old parameter
and the gain/loss of the new parameter on the current instance,
while constraining the new parameter to lie in the parameter set $\WW$. While predicting,
the algorithm chooses its projection matrix $\P_t$ 
by sampling from this mixture $\W_t$ \cite{pca}.

\section{The algorithm}

Our algorithm belongs to a class of \emph{Follow the Perturbed Leader} (FPL) algorithms,
which are defined by the choice:
\[
 \P_t = \argmax_{\P \in \PP} \left\{ \tr(\P (\X_{< t} + \N_t)) \right\},
\]
where $\X_{<t} = \sum_{q < t} \X_q$ is the cumulative data matrix observed so far, while $\N_t$
is the symmetric noise matrix generated randomly by the algorithm\footnote{Note that if the algorithm plays against 
\emph{oblivious} adversary, it is allowed to generate the noise matrix once in the first trial and then reuse it
throughout the game.} and w.l.o.g. we assume $\EX[\N_t] = \mathbf{0}$. Perturbing the cumulative data matrix is necessary as one can easily show that
any deterministic strategy (including Follow the Leader obtained by taking $\N_t = \mathbf{0}$) can be forced
to have regret linear in $T$.

Define a ``fake'' prediction strategy:
\[
\widetilde{\P}_t =  \argmax_{\P \in \PP} \left\{ \tr(\P (\X_{\leq t} + \N_t)) \right\},
\]
which acts as FPL, but adds the current instance $\X_t$ to the cumulative data matrix, and hence does not corresponds
to any valid online algorithm. What follows is a standard lemma for bounding the FPL regret, adapted to the matrix case:
\begin{lemma}
\label{l:be-the-leader}
We have:
\[
\REG \leq \sum_t \EX\left[\tr((\widetilde{\P}_t - \P_t) \X_t) \right] + \sum_t \EX\left[\lambda_{1:k}(\N_t - \N_{t-1}) \right],
\]
\end{lemma}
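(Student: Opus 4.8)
The plan is to first observe that the algorithm's own gain cancels, reducing the statement to a pure \emph{be-the-leader} inequality for the fake strategy $\widetilde{\P}_t$. Substituting the definition $\REG = \lambda_{1:k}(\X_{\leq T}) - \sum_t \EX[\tr(\P_t \X_t)]$ and expanding the claimed right-hand side as $\sum_t \EX[\tr(\widetilde{\P}_t \X_t)] - \sum_t \EX[\tr(\P_t \X_t)] + \sum_t \EX[\lambda_{1:k}(\N_t - \N_{t-1})]$, the terms $\sum_t \EX[\tr(\P_t \X_t)]$ occur on both sides and drop out. Hence, adopting the convention $\N_0 = \mathbf{0}$, it suffices to prove the single inequality
\[
\lambda_{1:k}(\X_{\leq T}) ~\leq~ \sum_{t=1}^T \EX[\tr(\widetilde{\P}_t \X_t)] + \sum_{t=1}^T \EX[\lambda_{1:k}(\N_t - \N_{t-1})].
\]
Note that the online predictions $\P_t$ have disappeared entirely; the content of the lemma lies in the fake leader alone.

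The main tool is the variational identity $\max_{\P \in \PP} \tr(\P \M) = \lambda_{1:k}(\M)$ for symmetric $\M$, from which I draw two facts: $\widetilde{\P}_t$ attains this maximum at $\M = \X_{\leq t} + \N_t$, and $\tr(\P \M) \leq \lambda_{1:k}(\M)$ for every $\P \in \PP$. I would then introduce the potential $\Phi_t := \lambda_{1:k}(\X_{\leq t} + \N_t) = \tr(\widetilde{\P}_t(\X_{\leq t} + \N_t))$ and telescope $\Phi_T - \Phi_0 = \sum_{t=1}^T (\Phi_t - \Phi_{t-1})$. For each increment, feasibility but sub-optimality of $\widetilde{\P}_t$ for $\X_{\leq t-1} + \N_{t-1}$ gives $\Phi_{t-1} \geq \tr(\widetilde{\P}_t(\X_{\leq t-1} + \N_{t-1}))$, and therefore
\[
\Phi_t - \Phi_{t-1} ~\leq~ \tr(\widetilde{\P}_t(\X_t + \N_t - \N_{t-1})) ~=~ \tr(\widetilde{\P}_t \X_t) + \tr(\widetilde{\P}_t(\N_t - \N_{t-1})) ~\leq~ \tr(\widetilde{\P}_t \X_t) + \lambda_{1:k}(\N_t - \N_{t-1}),
\]
using $\X_{\leq t} - \X_{\leq t-1} = \X_t$ and, in the last step, $\widetilde{\P}_t \in \PP$. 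Summing and using $\Phi_0 = \lambda_{1:k}(\mathbf{0}) = 0$ yields the deterministic bound $\lambda_{1:k}(\X_{\leq T} + \N_T) \leq \sum_t \tr(\widetilde{\P}_t \X_t) + \sum_t \lambda_{1:k}(\N_t - \N_{t-1})$.

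It remains to replace the left-hand side $\lambda_{1:k}(\X_{\leq T} + \N_T)$ by the comparator value $\lambda_{1:k}(\X_{\leq T})$ that appears in the regret. Taking expectations of the telescoped bound and using that $\M \mapsto \lambda_{1:k}(\M)$ is convex (a pointwise maximum of linear functions), Jensen's inequality together with $\EX[\N_T] = \mathbf{0}$ gives $\EX[\lambda_{1:k}(\X_{\leq T} + \N_T)] \geq \lambda_{1:k}(\X_{\leq T} + \EX[\N_T]) = \lambda_{1:k}(\X_{\leq T})$, since the data is fixed in advance. Chaining this with the expected telescoped inequality closes the argument. I expect this final boundary step to be the only delicate point: the telescoping itself is mechanical once $\Phi_t$ is chosen, but one must verify that the leftover perturbation $\N_T$ at the horizon costs nothing, which relies precisely on the zero-mean assumption $\EX[\N_t] = \mathbf{0}$; equivalently, since the offline optimal projector $\P^\ast$ is independent of the noise, $\EX[\tr(\P^\ast \N_T)] = 0$ and the same conclusion follows.
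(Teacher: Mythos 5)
Your proof is correct and follows essentially the same route as the paper's: the potential $\Phi_t=\lambda_{1:k}(\X_{\leq t}+\N_t)$ telescoped via optimality/feasibility of $\widetilde{\P}_t$, the bound $\tr(\widetilde{\P}_t(\N_t-\N_{t-1}))\leq\lambda_{1:k}(\N_t-\N_{t-1})$, and Jensen's inequality on the convex map $\lambda_{1:k}$ to remove $\N_T$ at the horizon. The only (harmless) cosmetic differences are that you cancel the $\sum_t\EX[\tr(\P_t\X_t)]$ terms explicitly up front and note the alternative zero-mean argument $\EX[\tr(\P^\ast\N_T)]=0$ in place of Jensen.
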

A vector version of Lemma \ref{l:be-the-leader} can be find in standard textbooks on online learning (see, e.g., \cite{book}).
Since adaptation to the matrix case is rather straightforward, we defer the proof to the Appendix.

We now specify the noise matrix our algorithm employs. Let $\G$ be an $n \times n$ matrix such that each entry is generated i.i.d.
from a Gaussian distribution, i.e. $\G_{ij} \sim \NN(0,\sigma^2)$. We define the noise matrix of our algorithm as:
\[
 \N_t = \sqrt{t} \N, \qquad \text{where~~} \N = \frac{1}{2}\left(\G + \G^\top\right).
\]
Note that $\N$ is a symmetrized version of $\G$, and $\N_t$ multiplies $\N$ by $\sqrt{t}$ so that the variance of each entry in $\N_t$
grows linearly in $t$. Interestingly, distribution of $\N$ is known as \emph{Gaussian orthogonal ensemble} in the Random Matrix Theory
\cite{randommatrix}. The algorithm uses the variable noise rate and hence does not require any tuning for the time horizon $T$.
We still have a single parameter $\sigma^2$, but that parameter is only chosen based on the sparseness of the instance matrix.

Note that according to rules for summing Gaussian variables,
we can also express $\N_t$ as a sum of $t$ independent copies of $\N$,
$\N_t = \sum_{q=1}^t \N^{(q)}$. We thus get an equivalent picture of our algorithm in which in each trial, an independent noise
variable $\N^{(t)}$ generated from a fixed distribution is added to the current data instance $\X_t$, 
and then the action of the algorithm is based on the sum of perturbed data instances.
This pictures makes our approach similar to RWP algorithm and let us relate our algorithm to dropout perturbation in the next section.

We now show the main result of this paper, the regret bound of the algorithm based on Gaussian perturbation.
\begin{theorem}
Given the choice of the noise matrix $\N_t$ described above,
\begin{itemize}
 \item For dense instance, setting $\sigma^2 = 1$ gives: 
 \[
 \REG ~\leq~ 2 k \sqrt{n T}.  
 \]
 \item For sparse instance, setting $\sigma^2 = \frac{1}{k \sqrt{n}}$ gives:
 \[
 \REG ~\leq~ 2 n^{1/4} \sqrt{k T}.
 \]
\end{itemize}
\end{theorem}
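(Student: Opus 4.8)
The plan is to start from \refl{l:be-the-leader}, which already reduces the regret to two sums: a \emph{stability} term $\sum_t\EX[\tr((\widetilde{\P}_t-\P_t)\X_t)]$, measuring how often inserting the current instance $\X_t$ changes the perturbed leader, and a \emph{noise} term $\sum_t\EX[\lambda_{1:k}(\N_t-\N_{t-1})]$, measuring the price of the growing perturbation. I would bound the two sums separately and then use the free parameter $\sigma^2$ to balance them.

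For the noise term, since $\N_t-\N_{t-1}=(\sqrt t-\sqrt{t-1})\N$ with a nonnegative scalar factor, we have $\lambda_{1:k}(\N_t-\N_{t-1})=(\sqrt t-\sqrt{t-1})\,\lambda_{1:k}(\N)$, and $\sum_{t=1}^T(\sqrt t-\sqrt{t-1})=\sqrt T$ telescopes, so this term is exactly $\sqrt T\,\EX[\lambda_{1:k}(\N)]$. It then remains to estimate the expected sum of the top $k$ eigenvalues of the Gaussian orthogonal ensemble $\N$. Because its diagonal entries have variance $\sigma^2$ and off-diagonal entries variance $\sigma^2/2$, its density is $\propto\exp(-\tfrac{1}{2\sigma^2}\|\N\|_F^2)$, i.e.\ $\N$ is $\tfrac{\sigma}{\sqrt2}$ times a standard GOE whose spectral edge sits near $2\sqrt n$; random matrix theory then yields $\EX[\lambda_{1:k}(\N)]\le k\,\EX[\lambda_{\max}(\N)]$ of order $k\sigma\sqrt n$, all top $k$ eigenvalues concentrating near the edge. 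This gives a noise term of order $k\sigma\sqrt{nT}$.

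For the stability term I would run a density-shift argument. Writing $g(\M)=\tr(\P^{\ast}(\X_{<t}+\M)\X_t)$ with $\P^{\ast}(\cdot)$ the top-$k$ projector (the maximizer defining $\P_t,\widetilde{\P}_t$) and $p$ the density of $\N_t=\sqrt t\,\N$, a change of variables gives $\EX[\tr(\widetilde{\P}_t\X_t)]=\int g(\M)\,p(\M-\X_t)\,d\M$ and $\EX[\tr(\P_t\X_t)]=\int g(\M)\,p(\M)\,d\M$, so the per-trial term equals $\int g(\M)\,[p(\M-\X_t)-p(\M)]\,d\M$. Bounding $g$ by its range $R:=\max_{\P}\tr(\P\X_t)$ leaves at most $R$ times the total-variation distance between $\N_t$ and $\N_t+\X_t$; since $p$ is Frobenius-isotropic with scale $\sqrt t\,\sigma$, this distance is governed by the one-dimensional shift and is of order $\|\X_t\|_F/(\sigma\sqrt t)$, so after $\sum_t t^{-1/2}\le 2\sqrt T$ the stability term is of order $R\,\|\X_t\|_F\,\sqrt T/\sigma$. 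The decisive point separating the regimes is the gain range $R$: for dense instances $R=\lambda_{1:k}(\X_t)\le k$ and $\|\X_t\|_F\le\sqrt n$, whereas for sparse instances $\X_t=\x_t\x_t^\top$ forces $R=\x_t^\top\P\x_t\le\|\x_t\|^2\le1$ and $\|\X_t\|_F\le1$, saving a factor $k\sqrt n$.

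Combining, the regret is of order $R\,\|\X_t\|_F\,\sqrt T/\sigma+k\sigma\sqrt{nT}$. In the dense case $\sigma^2=1$ is fixed and both pieces are $O(k\sqrt{nT})$, giving the first bound; in the sparse case I would optimize $\sigma$, balancing $\sqrt T/\sigma$ against $k\sigma\sqrt{nT}$, which yields $\sigma^2\asymp 1/(k\sqrt n)$ — precisely the prescribed value — and a regret of order $n^{1/4}\sqrt{kT}$, each of the two balanced terms contributing $n^{1/4}\sqrt{kT}$. I expect the main obstacle to be the random-matrix estimate of $\EX[\lambda_{1:k}(\N)]$ with a sharp enough non-asymptotic constant that the final factor is genuinely $2$, together with making the density/TV step rigorous — in particular justifying that $\lambda_{1:k}$ is differentiable with gradient $\P^{\ast}$ almost everywhere, which holds because the $k$-th and $(k+1)$-th eigenvalues of $\X_{<t}+\N_t$ coincide only on a measure-zero set under the Gaussian perturbation.
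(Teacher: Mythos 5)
Your proposal is correct and follows essentially the same route as the paper: the same split via Lemma~\ref{l:be-the-leader}, the same telescoping and GOE edge estimate $\EX[\lambda_{1:k}(\N)]\leq k\,\EX[\lambda_{\max}(\N)]=O(k\sigma\sqrt{n})$ for the noise term, and the same Gaussian density-shift argument with the same ranges $R=k$ (dense) versus $R=1$ (sparse) and scales $\|\X_t\|_F\le\sqrt n$ versus $\|\X_t\|_F=1$ for the stability term, leading to the identical trade-off and choice of $\sigma^2$. The only (cosmetic) difference is that you bound the per-trial stability term by $R$ times the total-variation distance between $\N_t$ and $\N_t+\X_t$, whereas the paper differentiates $f_t(s)=\EX[\tr(\P(\X_{<t}+\N_t+s\X_t)\X_t)]$ and applies the mean value theorem --- these give the same quantity $R\|\X_t\|_F/(\sigma\sqrt{2\pi t})$, and your version avoids the dominated-convergence justification.
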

\begin{proof}
We apply Lemma \ref{l:be-the-leader} and bound both sums on the right hand side separately. We start with the second sum.
We have:
\[
\sum_t \EE{\lambda_{1:k}(\N_t - \N_{t-1})} = \sum_t (\sqrt{t} - \sqrt{t-1}) \EE{\lambda_{1:k}(\N)} = \sqrt{T}\EE{\lambda_{1:k}(\N)} 
\]
It follows from Random Matrix Theory (see, e.g., \cite{DavidsonSzrek2001})  that the largest eigenvalue of a matrix generated from
a Gaussian orthogonal ensemble is of order $O(\sqrt{n})$, specifically:
\[
\EE{\lambda_{\max}(\N)} \leq \sqrt{n\sigma^2}.
\]
Therefore,
\[
\EE{\lambda_{1:k}(\N)} \leq k \EE{\lambda_{\max}(\N)} \leq k \sqrt{n \sigma^2},
\]
so that the second sum is bounded by $k \sqrt{n T \sigma^2}$.

Let us now bound the first sum.
First, note that $\N_{ij} \sim \NN(0,\sigma^2/2)$ for $i \neq j$ and $\N_{ii} \sim \NN(0,\sigma^2)$. 
This means that the joint density $p(\N) = p(\N_{11},\ldots,\N_{nn})$ is proportional to:
\[
p(\N) ~\propto~ \exp\bigg\{-\frac{1}{2\sigma^2} \bigg( \sum_{i > j} 2\N_{ij}^2 + \sum_i \N_{ii}^2 \bigg)\bigg\}
~=~ \exp \left\{ -\frac{1}{2\sigma^2} \tr(\N^2)\right \}.
\]
Similarly, the joint density of $\N_t$ is proportional to:
\[
p_t(\N_t) \propto \exp \left\{ \frac{1}{2 t\sigma^2} \tr(\N_t^2)\right \}.
\]
For any symmetric matrix $\A$, define:
\[
 \P(\A) = \argmax_{\P \in \PP} \left\{ \tr(\P \A) \right\},
\]
so that $\P_t = \P(\X_{<t} + \N_t)$ and $\widetilde{\P}_t = \P(\X_{<t} + \N_t + \X_t)$.
Furthermore, define a function:
\[
 f_t(s) = \EE{\tr(\P(\X_{<t} + \N_t + s \X_t)\, \X_t)}.
\]
Note that $f_t(0) = \EE{\tr(\P_t \X_t)}$ and $f_t(1) = \EE{\tr(\widetilde{\P}_t \X_t)}$.
In this notation,
\[
\sum_t \EE{ \tr((\widetilde{\P}_t - \P_t) \X_t) } = \sum_t (f_t(1) - f_t(0)),
\]
so it remains to bound $f_t(1) - f_t(0)$ for all $t$.
We have:
\begin{align*}
f_t(s) &= \int \tr(\P(\X_{<t} + \N_t + s\X_t) \, \X_t) p_t(\N_t) \dif \N_t\\
&= \int \tr(\P(\X_{<t} + \N_t) \, \X_t) p_t(\N_t - s \X_t) \dif \N_t,
\end{align*}
which follows from changing the integration variable from $\N_t$ to $\N_t - s \X_t$.
Since by H{\"o}lder's inequality:
\begin{equation}
\tr(\P(\X_{<t} + \N_t) \, \X_t) ~\leq~ \tr(\P(\X_{< t} + \N_t)) \cdot \|\X_t\|_{\infty}
~=~ k \|\X_t\|_{\infty} ~\leq~ k,
\label{eq:holders}
\end{equation}
and since $p_t$ is the density of Gaussian distribution, 
it can easily be shown by using standard argument based on Dominated Convergence Theorem
that one can replace the order of differentiation w.r.t. $s$ and integration w.r.t. $\N_t$.
This means that $f_t(s)$ is differentiable and:
\begin{align*}
 f'_t(s) &~=~ \int \tr(\P(\X_{<t} + \N_t) \, \X_t) \frac{\dif p_t(\N_t - s \X_t)}{\dif s} \dif \N_t \\
 &~=~ \frac{1}{t\sigma^2}\int \tr(\P(\X_{<t} + \N_t) \, \X_t) \tr((\N_t - s\X_t) \X_t) p_t(\N_t - s \X_t) \dif \N_t \\
 &~=~ \frac{1}{t\sigma^2} \int \tr(\P(\X_{<t} + \N_t + s \X_t) \, \X_t) \tr(\N_t \X_t) p_t(\N_t)  \dif \N_t \\
 &~\leq~ \frac{r}{t\sigma^2} \int \left(\tr(\N_t \X_t) \right)_+ p_t(\N_t) \dif \N_t, 
\end{align*}
where $(c)_+ = \max\{c,0\}$, and $r = k$ in the dense instance case, while $r = 1$ in the sparse instance case.
The last inequality follows from the same argument as in (\ref{eq:holders}) when the instances are dense, and from the opposite application
of H{\"o}lder's inequality when the instances are sparse, i.e. for sparse instance $\X_t = \x_t \x_t^\top$ with $\|\x_t\| = 1$, and any $\P$:
\[
\tr (\P \, \x_t \x_t^\top) ~\leq~ \tr(\x_t \x_t^\top) \cdot \|\P\|_{\infty} ~=~ \|\x_t\|^2 \cdot 1 ~=~ 1.
\]
Denote:
\[
z ~=~ \tr(\N_t \X_t) = 2 \sum_{i > j} (\N_t)_{ij} (\X_t)_{ij}
+ \sum_i (\N_t)_{ii} (\X_t)_{ii}.
\]
Using summation rules for Gaussian variables:
\[
z ~\sim~ \NN\left(0,\, 2t \sigma^2 \sum_{i > j} (\X_t)^2_{ij} + 
t \sigma^2 \sum_i (\X_t)^2_{ii} \right)
~=~ \NN \left(0, t \sigma^2 \tr(\X^2) \right),
\]
so that:
\begin{align*}
f'_t(s)
 &~\leq~ \frac{r}{t\sigma^2} \int (\tr(\N_t \X_t))_+ p_t(\N_t) \dif \N_t \\
 &~=~ \frac{r}{t\sigma^2} \mathbb{E}_{z \sim \NN(0,t \sigma^2 \tr(\X^2))}[(z)_+] \\
 &~=~ \frac{r}{\sqrt{t\sigma^2}} \sqrt{\tr(\X^2)} \; \mathbb{E}_{z \sim \NN(0,1)} [(z)_+] \\
 &~=~ \frac{r}{\sqrt{2 \pi t\sigma^2}} \sqrt{\tr(\X^2)}.
\end{align*}

By the mean value theorem,
\[
 f_t(1) - f_t(0) = f_t'(s), \qquad \text{for some~} s \in [0,1],
\]
which implies:
\[
\EE{ \tr((\widetilde{\P}_t - \P_t) \X_t) } \leq  \frac{r}{\sqrt{2 \pi t\sigma^2}} \sqrt{\tr(\X_t^2)}.
\]
Summing over trials and using $\sum_{t=1}^T 1/\sqrt{t} \leq 2\sqrt{T}$, we bound the first sum in Lemma \ref{l:be-the-leader}
by $r\sqrt{\frac{2 T \max_t \tr(\X_t^2) }{\pi \sigma^2}}$. Using the bound on the second sum, we get that:
\[
\REG
~\leq~ r\sqrt{\frac{2 T \max_t \tr(\X_t^2) }{\pi \sigma^2}}
+ k\sqrt{nT \sigma^2}.
\]
The proof is finished by noticing that $\tr(\X^2) \leq n$ and $r=k$ for dense instances, 
while $\tr(\X^2) = 1$ and $k=1$ for sparse instances.
\end{proof}

Comparing the regret with values of the minimax regret
$\Theta(\sqrt{kT})$ in the standard online PCA setting (sparse instance case), and
$\Theta(k \sqrt{T \log n})$ in the dense instance case \cite{NieALT},
we see that the algorithm presented here is suboptimal by a factor of $O(n^{1/4})$ in the online PCA setting,
and by a factor of $O(\sqrt{n} / \log n)$ in the dense instances setting.

\section{Conclusions}

In this paper, we studied the online PCA problem and its generalization to the case of dense instance matrices.
While there are algorithms which essentially achieve the minimax regret, such as Matrix Exponentiated Gradient
or (Matrix) Gradient Descent, all these methods take $O(n^3)$ per trial, because
they require full eigendecomposition of the data matrix.
We proposed an algorithm based on Follow the Perturbed Leader approach, which uses as a perturbation
a random symmetric Gaussian matrix. The algorithm
avoids full eigendecomposition and only requires calculating the top $k$ eigenvectors.
Hence, prediction takes $O(k n^2)$, while the algorithm achieves the worst-case regret 
which is only $O(n^{1/4})$ close to the minimax regret for standard online PCA setting,
and $O(\sqrt{n})$ close to minimax regret for generalization of online PCA to
a dense instance matrices. 
Finally, we raised an open question, whether a more adaptive version of our algorithm, based on dropout perturbation,
would achieve the minimax regret.

\subsubsection*{Acknowledgments}
Wojciech Kot{\l}owski was supported by the Polish National Science Cente grant\\
2013/11/D/ST6/03050.

\appendix

\section{Proof of Lemma~\ref{l:be-the-leader}}
We have:
\begin{align*}
 \lambda_{1:k}(\X_{\leq t} + \N_t) &~=~ \max_{\P \in \PP} \left\{ \tr(\P (\X_{\leq t} + \N_t)) \right\} \\
 &~=~ \tr(\widetilde{\P}_t(\X_{\leq t} + \N_t)) \\
 &~=~  \tr(\widetilde{\P}_t(\X_t + \N_t - \N_{t-1})) + \tr(\widetilde{\P}_t(\X_{< t} + \N_{t-1})) \\
 &~\leq~ \tr(\widetilde{\P}_t(\X_t + \N_t - \N_{t-1})) + \lambda_{1:k}(\X_{< t} + \N_{t-1}),
\end{align*}
so that:
\[
\lambda_{1:k}(\X_{\leq t} + \N_t) - \lambda_{1:k}(\X_{< t} + \N_{t-1}) \leq \tr(\widetilde{\P}_t(\X_t + \N_t - \N_{t-1})). 
\]
Summing over trials $t=1,\ldots,T$, the terms on the left-hand side telescope and, defining $\N_0 = \mathbf{0}$, we get:
\[
\lambda_{1:k}(\X_{\leq T} + \N_T) ~\leq~ \sum_{t=1}^T \tr(\widetilde{\P}_t(\X_t + \N_t - \N_{t-1})).
\]
Since $\lambda_{1:k}(\cdot)$ is convex as a maximum over linear functions, Jensen's inequality implies
$\lambda_{1:k}(\X_{\leq T}) \leq \EE{\lambda_{1:k}(\X_{\leq T} + \N_T)}$ and hence:
\begin{align*}
\lambda_{1:k}(\X_{\leq T}) 
&~\leq~ \sum_{t=1}^T \EE{\tr(\widetilde{\P}_t(\X_t + \N_t - \N_{t-1}))} \\
&~\leq~ \sum_{t=1}^T \EE{\tr(\widetilde{\P}_t \X_t)} + \sum_{t=1}^T \EE{\max_{\P \in \PP} \left\{\N_t - \N_{t-1} \right\}} \\
&~=~ \sum_{t=1}^T \EE{\tr(\widetilde{\P}_t \X_t)} + \sum_{t=1}^T \EE{\lambda_{1:k}(\N_t - \N_{t-1})}.
\end{align*}
The lemma follows by plugging the inequality above into the definition of the regret (\ref{eq:regret-def}).

\bibliography{pap}

\begin{thebibliography}{23}
\providecommand{\natexlab}[1]{#1}
\providecommand{\url}[1]{\texttt{#1}}
\expandafter\ifx\csname urlstyle\endcsname\relax
  \providecommand{\doi}[1]{doi: #1}\else
  \providecommand{\doi}{doi: \begingroup \urlstyle{rm}\Url}\fi

\bibitem[Arnoldi(1951)]{arnoldi}
Walter~E. Arnoldi.
\newblock The principle of minimized iterations in the solution of the matrix
  eigenvalue problem.
\newblock \emph{Quarterly of Applied Mathematics}, 9:\penalty0 17--29, 1951.

\bibitem[Arora et~al.(2012)Arora, Cotter, Livescu, and Srebro]{AroraAllerton}
Raman Arora, Andrew Cotter, Karen Livescu, and Nathan Srebro.
\newblock Stochastic optimization for {PCA} and {PLS}.
\newblock In \emph{2012 50th Annual Allerton Conference on Communication,
  Control, and Computing}, pages 861--868, 2012.

\bibitem[Arora et~al.(2013)Arora, Cotter, and Srebro]{AroraNIPS}
Raman Arora, Andrew Cotter, and Nati Srebro.
\newblock Stochastic optimization of {PCA} with capped {MSG}.
\newblock In \emph{NIPS}, pages 1815--1823, 2013.

\bibitem[Arora and Kale(2007)]{semidefinite}
Sanjeev Arora and Satyen Kale.
\newblock A combinatorial, primal-dual approach to semidefinite programs.
\newblock In \emph{STOC}, pages 227--236. ACM, 2007.

\bibitem[Balsubramani et~al.(2013)Balsubramani, Dasgupta, and
  Freund]{incrementalPCA}
Akshay Balsubramani, Sanjoy Dasgupta, and Yoav Freund.
\newblock The fast convergence of incremental {PCA}.
\newblock In \emph{NIPS}, pages 3174--3182, 2013.

\bibitem[Boutsidis et~al.(2015)Boutsidis, Garber, Karnin, and
  Liberty]{GarberSODA}
Christos Boutsidis, Dan Garber, Zohar~Shay Karnin, and Edo Liberty.
\newblock Online {P}rincipal {C}omponents {A}nalysis.
\newblock In \emph{{SODA}}, pages 887--901, 2015.

\bibitem[Bunch et~al.(1978/79)Bunch, Nielsen, and Sorensen]{rankone}
James~R. Bunch, Christopher~P. Nielsen, and Danny~C. Sorensen.
\newblock \emph{Numerische Mathematik}, 31\penalty0 (1):\penalty0 31--48,
  1978/79.

\bibitem[Cesa-Bianchi and Lugosi(2006)]{book}
Nicol{\`o} Cesa-Bianchi and G{\'a}bor Lugosi.
\newblock \emph{Prediction, learning, and games}.
\newblock Cambridge University Press, 2006.
\newblock ISBN 978-0-521-84108-5.

\bibitem[Cullum and Willoughby(1985)]{lanczos}
Jane~K. Cullum and Ralph~A. Willoughby.
\newblock \emph{Lanczos Algorithms for Large Symmetric Eigenvalue
  Computations}.
\newblock Cambridge University Press, 1985.

\bibitem[Davidson and Szarek(2001)]{DavidsonSzrek2001}
Kenneth~R. Davidson and Stanislaw~J. Szarek.
\newblock Local operator theory, random matrices and {B}anach spaces.
\newblock In \emph{Handbook of the geometry of Banach spaces. Volume 1}, pages
  317--366. Elsevier, North-Holland, Amsterdam, 2001.

\bibitem[Devroye et~al.(2013)Devroye, Lugosi, and Neu]{rwp}
Luc Devroye, G{\'a}bor Lugosi, and Gergely Neu.
\newblock Prediction by random-walk perturbation.
\newblock In \emph{COLT}, pages 460--473, 2013.

\bibitem[Garber et~al.(2015)Garber, Hazan, and Ma]{GarberICML}
Dan Garber, Elad Hazan, and Tengyu Ma.
\newblock Online learning of eigenvectors.
\newblock In \emph{ICML}, 2015.

\bibitem[Hazan and Kale(2012)]{frankewolfe}
Elad Hazan and Satyen Kale.
\newblock Projection-free online learning.
\newblock In \emph{ICML}, 2012.

\bibitem[Hazan et~al.(2010)Hazan, Kale, and Warmuth]{pcaopen}
Elad Hazan, Satyen Kale, and Manfred~K. Warmuth.
\newblock On-line variance minimization in ${O}(n^2)$ per trial?
\newblock In \emph{COLT}, pages 314--315. Omnipress, 2010.
\newblock open problem.

\bibitem[Jiazhong et~al.(2013)Jiazhong, Kot{\l}owski, and Warmuth]{NieALT}
Nie Jiazhong, Wojciech Kot{\l}owski, and Manfred~K. Warmuth.
\newblock Online {PCA} with optimal regrets.
\newblock In \emph{ALT}, volume 8139 of \emph{LNCS}, pages 98--112. Springer,
  2013.

\bibitem[Kalai and Vempala(2005)]{fpl}
Adam~Tauman Kalai and Santosh Vempala.
\newblock Efficient algorithms for online decision problems.
\newblock \emph{J. Comput. Syst. Sci.}, 71\penalty0 (3):\penalty0 291--307,
  2005.

\bibitem[Neu and Lugosi(2014)]{ce}
Gergely Neu and G{\'a}bor Lugosi.
\newblock Private communication, 2014.

\bibitem[Shamir(2015)]{stochasticPCA}
Ohad Shamir.
\newblock A stochastic {PCA} and {SVD} algorithm with an exponential
  convergence rate.
\newblock In \emph{ICML}, 2015.

\bibitem[Tao(2012)]{randommatrix}
Terence Tao.
\newblock \emph{Topics in Random Matrix Theory}.
\newblock American Mathematical Society, 2012.

\bibitem[Tsuda et~al.(2005)Tsuda, {R\"atsch}, and Warmuth]{meg}
Koji Tsuda, Gunnar {R\"atsch}, and Manfred~K. Warmuth.
\newblock Matrix exponentiated gradient updates for on-line learning and
  {B}regman projections.
\newblock \emph{Journal of Machine Learning Research}, 6:\penalty0 995--1018,
  2005.

\bibitem[van Erven et~al.(2014)van Erven, Kot{\l}owski, and Warmuth]{dropout}
Tim van Erven, Wojciech Kot{\l}owski, and Manfred~K. Warmuth.
\newblock Follow the leader with dropout perturbations.
\newblock In \emph{COLT}, pages 949--974, 2014.

\bibitem[Warmuth and Kuzmin(2006)]{var}
Manfred~K. Warmuth and Dima Kuzmin.
\newblock Online variance minimization.
\newblock In \emph{COLT}, pages 514--528, 2006.

\bibitem[Warmuth and Kuzmin(2008)]{pca}
Manfred~K. Warmuth and Dima Kuzmin.
\newblock Randomized online {PCA} algorithms with regret bounds that are
  logarithmic in the dimension.
\newblock \emph{Journal of Machine Learning Research}, 9:\penalty0 2287--2320,
  2008.

\end{thebibliography}

\end{document}